\documentclass{article}

\usepackage{arxiv}

\usepackage[utf8]{inputenc} 
\usepackage[T1]{fontenc}    
\usepackage{hyperref}       
\usepackage{url}            
\usepackage{booktabs}       
\usepackage{amsfonts}       
\usepackage{nicefrac}       
\usepackage{microtype}      
\usepackage{lipsum}
\usepackage{graphicx}
\graphicspath{ {./images/} }

\usepackage{natbib}
\usepackage{algorithm}
\usepackage{algorithmic}
\usepackage{amsmath}
\usepackage{caption}
\usepackage{wrapfig}
\usepackage{appendix}
\usepackage{amsthm}
\newtheorem{prop}{Proposition}
\usepackage{subcaption}
\usepackage{multirow}

\title{\textsc{ReCurveflow}: A Flow Matching Framework that Learns Curved Reaction Trajectories to Predict Transition State Geometries}

\author{
Seungheun Baek \\
Department of Computer Science\\
Korea University\\
Seoul, South Korea\\
\texttt{sheunbaek@korea.ac.kr} \\
\And
Mogan Gim\thanks{Corresponding author} \\
Department of Biomedical Engineering\\
Hankuk University of Foreign Studies\\
Yongin, South Korea\\
\texttt{gimmogan@hufs.ac.kr} \\
\And
Jaewoo Kang\footnotemark[1] \\
Department of Computer Science\\
Korea University\\
Seoul, South Korea\\
\texttt{kangj@korea.ac.kr}
}

\begin{document}
\maketitle

\def\modelname{\textsc{ReCurveflow}}

\begin{abstract}
Predicting transition states (TS) in chemical reactions is crucial, as they provide insights into reaction mechanisms. Recent work on TS prediction have focused on flow matching supervised on straight linear paths that do not align with actual reaction trajectories. We propose a novel flow matching-based framework \modelname~that learns to predict TS geometries supervised on continuously curved reference paths interpolated from a full NEB-derived band of molecular geometries. We also introduce off-path correction, which grants \modelname~with the ability to produce corrective velocity fields when engaged off-path geometry states during inference rollout, leading to better resistance against exposure bias and accuracy in TS prediction. Across three data splits and six evaluation metrics, \modelname~achieves the best result on the majority of split-metric combinations against seven baselines. Qualitative analyses further show that \modelname~generates reaction trajectories with energy profiles that closely track the reference NEB path, provides initializations that ease the NEB optimization bottleneck, and exhibits the intended corrective behavior in its learned velocity fields. The \modelname~codebase is publicly available at https://github.com/dmis-lab/ReCurveflow.
\end{abstract}



\section{Introduction}

Chemical reactions are continuous configurational changes of atom-wise molecular geometry, transforming reactants into products. Based on this perspective, a reaction can be viewed as a trajectory across a potential energy surface (PES), connecting the reactant and product states (RS and PS) through a transition state (TS). A TS is a first-order saddle point of the PES. It is the maximum-energy configuration along the minimum-energy path connecting RS and PS. Because its energy determines the activation barrier of a reaction, the TS plays a central role in determining reaction feasibility (e.g., reaction rate and yield) and elucidating the underlying mechanisms of chemical reactions~\cite{eyring1935activated}. In fact, understanding TSs can benefit various applications such as catalyst design, retrosynthesis-aware drug design and the study of biological pathways~\cite{schramm2013transition}.

Weighted by the importance of TSs, computational methods have been developed to locate minimum energy paths (MEP) and characterize the configurational changes along a chemical reaction~\cite{schlegel2011geometry}. One of the most notable methods is the nudged elastic band (NEB) approach which first generates an initial trajectory of geometry "images" interpolated between the two endpoints RS and PS~\cite{jonsson1998neb, henkelman2000cineb}. These images are then optimized based on energy calculations constrained by spring forces between adjacent images. The highest energy image residing at the saddle point of the reaction trajectory is approximated as the TS. 

The Transition1X dataset has motivated researchers to develop data-driven TS geometry prediction models. Recent works employ deep generative frameworks, particularly flow matching, in accurately predicting TS geometries conditioned on those of RS and PS~\cite{duan2025reactot, galustian2025goflow}. In this downstream task, conventional flow matching defines its training objective as learning a time-dependent constant velocity field that transports a randomly sampled initial geometry to the ground-truth TS geometry along a straight linear interpolation path~\cite{lipman2022flow}.

In the Transition1X dataset~\citep{schreiner2022transition1x}, each reaction data instance consists not only the endpoint and saddle-point images (RS, PS, TS) but also intermediary images as well. We hypothesize utilizing all the images from that dataset, by means of constructing a reference path may enable velocity field supervision towards resemblance to MEP curvature. Accordingly, we exploit them to reformulate the flow matching problem. Rather than relying on a straight linear path, we first utilize cubic spline interpolation to build a \textit{continuous curved reference path} that passes through all of these discrete geometry images for each reaction. By this modification, we expect the model to generate not only accurate TS geometries but also reaction trajectories that resemble the NEB-derived MEP.

However, this reformulation may pose risks of exposure bias, partially addressed by literature on flow matching models~\cite{qin2026soar, huang2026exposure}. To alleviate this issue, we augment our reformulated flow matching with off-path correction. Given a geometry state perturbed off the reference path at a sampled timepoint, the model is additionally supervised to produce a vector field component directed back towards the reference path. By this augmented supervision, the model is equipped with off-path correction mechanism that improves resistance to exposure bias and accuracy in TS generation at inference.

The main contributions of this paper are as follows:
\begin{itemize}
    \item We reformulate flow matching for TS prediction around a continuous curved reference path spline-fit to the full NEB image band, so that \modelname~generates not only TS geometries but also reaction trajectories.
    \item We counter the exposure bias this introduces with dual off-path correction, pulling both perturbed states and the model's own rollout states back toward the reference path during model training.
    \item \modelname~improves TS accuracy over prior baselines on all three splits, generates paths with physically plausible energy profiles, and provides NEB initializations that recover the reference reaction channel more often and at lower cost.
\end{itemize}


\section{Related Works}

\subsection{Learning-based Transition State Prediction}
Recent learning-based approaches directly predict transition state (TS) geometries from reactants and products without iterative quantum optimization. TSDiff first formulated TS generation as a diffusion process conditioned on 2D molecular graphs~\cite{kim2023tsdiff}. OA-ReactDiff incorporated reactant and product 3D structures through an object-aware equivariant diffusion model~\citep{duan2023oareactdiff}. More recently, React-OT and GoFlow employed optimal transport and flow matching, respectively, for more efficient TS generation~\citep{duan2025reactot,galustian2025goflow}, and FragmentFlow further improved scalability to larger molecular systems through fragment-based generation~\citep{shprints2026fragmentflow}. None of these works exploit the intermediary waypoint images available for each reaction in the Transition1x dataset~\citep{schreiner2022transition1x}, nor do they incorporate the notion of a reaction trajectory in their model design.

\subsection{Learning-based Reaction Trajectory Generation}
Recent studies have extended transition state prediction to modeling the entire reaction pathway. MEPIN learns continuous minimum energy paths between reactants and products, while MolGen jointly generates reaction pathways, transition states, and products within a unified framework~\citep{zhang2025mepin,zhou2025molgen}. Although these methods model reaction dynamics beyond isolated transition states, they treat reaction paths as generation targets. 
We remark that these works are key motivations for constructing continuous curved reference paths from a ordered sequence of geometry images residing on NEB-derived MEP and utilizing them as supervision within our flow matching framework.

\section{Methods}

\subsection{Problem Formulation}
In a chemical reaction, let $R,T,P \in \mathbb{R}^{N\times3}$ denote the 3D atomic configurations (molecular geometries) for the reactant, transition, product states (RS, TS, PS) respectively, where $N$ is the total number of atoms. We assume that all states share the same atom ordering and atom types, such that atom-wise correspondence is known.

Given the true molecular geometries of a reactant and product $(R,P)$, our goal is to predict the geometry of its corresponding transition state $T$. We treat this as a conditional generative modeling problem by learning a velocity field $v_\theta(x,t\,|\,R,P)$, where $x\in\mathbb{R}^{N\times3}$ is a molecular geometry and $t\in[0,1]$ is flow time. 

The velocity field defines the ordinary differential equation (ODE):
\begin{equation}
\frac{dx(t)}{dt}=v_\theta(x,t\,|\,R,P),
\end{equation}
Integrating the ODE with a fixed small step size from the initial condition $x(0)=R$ yields an ordered sequence of geometries. We denote this sequence as the \textbf{generated reaction trajectory}. Here, we reparameterize flow time such that the TS lies in the trajectory midpoint ($t=0.5$), and treat $x(0.5)$ as predicted TS geometry.

\subsection{Data Preparation}
\subsubsection{Transition State Dataset.}
We use Transition1x~\cite{schreiner2022transition1x} to train and evaluate our TS geometry prediction model. This dataset comprises 10,073 organic reactions, each providing RS, TS, and PS geometries obtained via the Nudged Elastic Band (NEB) algorithm with Density Functional Theory (DFT) energy and force calculations. Beyond this triplet, every reaction is additionally annotated with an ordered sequence of intermediary geometry images which anchor the minimum-energy path (MEP) connecting the reactant and product endpoints. For clarity, we denote these atomic configurations as \textbf{On-Trajectory States (OS)}. Different from previous TS prediction methods, we propose to exploit these OS geometries in our flow-matching framework which will be elaborated in the next paragraph.

\begin{figure}[t]
\centering
\begin{minipage}[t]{0.48\textwidth}
  \vspace{0pt}
  \centering
  \includegraphics[width=0.9\linewidth]{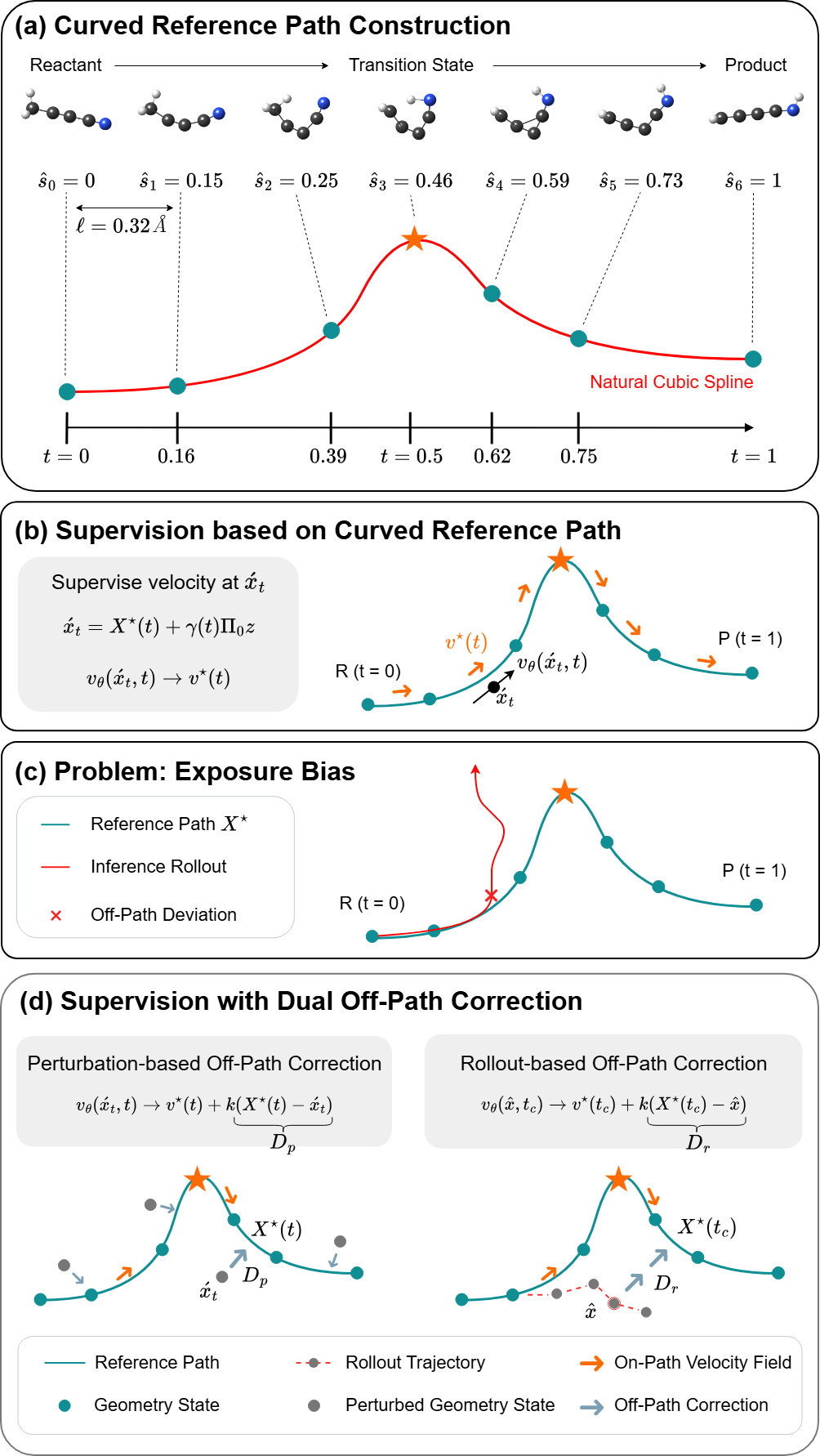}
  \caption{Overview of our framework.}
  \label{fig:main}
\end{minipage}\hfill
\begin{minipage}[t]{0.48\textwidth}
  \vspace{0pt}
  \input{algorithms/algo_1}
  \input{algorithms/algo_2}
\end{minipage}
\end{figure}

\subsubsection{Curved Reference Path Construction.}
Existing flow matching frameworks supervise the velocity field based on straight linear interpolant $x_t$, connecting a simple Gaussian prior ($x_0 \sim \mathcal{N}(0,I)$) or an initial guess based on double-ended points ($x_0 = \frac{R+P}{2}$)~\citep{duan2025reactot} to the TS ($x_1 = T$), as expressed below.
\begin{equation}
x_t = (1-t)x_0 + tx_1, \quad v_t = x_1 - x_0
\end{equation}
where $v_t$ is the ground truth velocity field derived from straight linear interpolation.

We propose to replace the linear interpolant with one that bears a resemblance to curvature of the NEB-derived MEP. That is, we intend to obtain time-varying ground truth velocity fields from a \textbf{curved reference path} where $(x_0,x_1)=(R,P)$. This requires path parameterization since the velocity field should be obtained for $t\in[0,1]$ during training time.

For each reaction, the dataset provides a \textit{discrete} ordered sequence of ten geometry images $\{X_k\}_{k=0}^{9}$ including double-ended points RS ($X_0=R$) and PS $(X_{9}=P)$, TS positioned somewhere between them ($X_m=T,\quad 0<m<9$) and the remaining OS images. We utilize these geometry images to construct a \textit{continuous} curved reference path of interpolated geometry states $X^{\star}$ used for velocity field supervision. Note that each of the OS images $X\in\mathbb{R}^{N\times3}$ share the same atom ordering and atom types with the TS.

Since these NEB-derived geometry images are unevenly spaced across the MEP, we need to adjust their atomic configurations and align their discrete positional indices ($k$) with the time indices ($t$) of our proposed interpolant. First, we reposition the geometries by centering and aligning using the Kabsch algorithm. Then, we build normalized distances across the flow time axis of the reference path:
\begin{equation}
\hat{s}_j = \frac {\sum_{i<j}\ell_i} {\sum_i\ell_i},
\end{equation}
where  $\ell_j=(\tfrac{1}{N}\|X_{j+1}-X_j\|_F^2)^{1/2}$ is the per-atom RMSD between consecutive geometries.

Subsequently, we rescale the coordinates so that the TS is coerced to be positioned at $t=0.5$:
\begin{equation}
t_j=
\begin{cases}
\dfrac{\hat{s}_j}{2\hat{s}_m},
&
\hat{s}_j\le\hat{s}_m,\\[0.8ex]
\dfrac12+
\dfrac{\hat{s}_j-\hat{s}_m}
{2(1-\hat{s}_m)},
&
\hat{s}_j>\hat{s}_m.
\end{cases}
\end{equation}
where $t_0=0$, $t_m=0.5$, $t_9$=1. Each geometry image $X_j$ then is paired with its corresponding time index $t_j$, per reaction in the dataset. 

Finally, we fit a natural cubic spline to the resulting time-geometry pairs $\{(t_j, X_j)\}_{j=0}^{9}$, yielding a $C^2$-continuous curved reference path $X^\star$ defined over $t\in[0,1]$. Here, we denote $X^{\star}\in\mathbb{R}^{N\times 3}$ as the \textbf{reference geometry state} whose time derivative $v^{\star}_t$ is the \textbf{ground truth velocity field} expressed as below:
\begin{align}
v^{\star}_t = \frac{d X^{\star}(t)}{dt}
\end{align}
where $(X^\star(0),X^\star(0.5),X^\star(1))=(R,T,P)$. We expect $v_{\theta}$ to capture local curvatures of the interpolant built from NEB-derived geometries.

\subsection{\modelname}
\subsubsection{Supervision based on Curved Reference Path.}
Given the continuous curved reference path $X^{\star}(t)$, we train $v_\theta(x,t\,|\,R,P)$ to regress its corresponding ground truth velocity $v^\star(t)$ at $t\sim\mathcal{U}(0,1)$. To provide support in a neighborhood of the path rather than on the path alone, we perturb the reference geometery state at each sampled time using mean-free Gaussian tube. The loss objective is expressed as below: 
\begin{align}
\acute{x}_t &= X^\star(t) + \gamma(t)\Pi_0z, \quad z\sim\mathcal{N}(0,I),
\\
\mathcal{L} &= \mathbb{E} \left[ \left\| v_\theta(\acute{x}_t,t\,|\,R,P) - v^\star(t) \right\|^2 \right] \label{eq:main}
\end{align}
where $\Pi_0z$ is the mean-free Gaussian perturbation and $\acute{x}_t$ is the \textbf{perturbed reference geometry state}.

\subsubsection{Supervision with Dual Off-Path Correction.}
The training objective (Eq.~\eqref{eq:main}) leaves our flow matching framework vulnerable to exposure bias (Fig~\ref{fig:main}(c)). While $v_\theta$ is fed geometry states from inside the Gaussian tube surrounding $X^\star(t)$ during training, it is fed states produced by its own preceding predictions during inference. Suppose that at some inference step, its generated velocity points off-path, showing discrepancy with its corresponding tangent of the curved reference path. Having integrated this velocity may produce a geometry state rarely observed during training. Since numerical integration proceeds as a rollout, such off-path deviations are compounded rather than corrected. Hence, a single off-path velocity can propagate through the remaining steps, eventually yielding a reaction trajectory whose midpoint deviates from the true TS geometry.

Motivated by SOAR~\citep{qin2026soar}, we devise an off-path correction term that is added to the ground truth velocity. The core idea is to supervise $v_\theta$ with velocity fields that pull off-path geometry states back onto the reference path, in contrast to SOAR, which pulls off-path states back toward the endpoint target. Given a sampled $t$, we augment the regression target with a off-path correction term that is a displacement vector from the perturbed $\acute{x}_t$ back to its reference $X^{\star}(t)$. The resulting objective is expressed as below: 
\begin{align}
D_{\mathrm{p}} &= X^{\star}(t) - \acute{x}_t, \\
\mathcal{L}_{\mathrm{p}} &= \mathbb{E}\left[ \left\| v_\theta(\acute{x}_t,t\,|\,R,P)
  - \bigl(v^\star(t) + k\,D_{\mathrm{p}}\bigr) \right\|^2 \right] \label{eq:opc1}
\end{align}
where $k$ controls the strength of the correction. We denote this as \textbf{Perturbation-based Off-Path Correction}.

We then perform a rollout by successively integrating the predicted velocities over a sub-interval $\mathcal{T}\subset[0,1]$ of the flow time axis, with stop-gradients applied. Let $t_c$ be a timestep sampled from $\mathcal{T}$, and let $\hat{x}$ denote the geometry state the rollout reaches at $t_c$, which is further perturbed with random noise so that the model is deliberately exposed to an off-path deviation. We impose additional supervision at this step by applying the same off-path correction target as before, now anchored at the geometry the model actually arrived at rather than at a perturbed geometry state of the curved reference path as expressed below:
\begin{align}
D_{\mathrm{r}} &= X^{\star}(t_c) - \hat{x}, \\
\mathcal{L}_{\mathrm{r}} &= \mathbb{E}\left[ \left\| v_\theta(\hat{x},t_c\,|\,R,P)
  - \bigl(v^{\star}(t_c) + k\,D_{\mathrm{r}}\bigr) \right\|^2 \right] \label{eq:opc2}
\end{align}
During inference rollout, we expect $v_\theta$ to produce self-correcting velocity fields when engaging off-path geometry states. We denote this as \textbf{Rollout-based Off-Path Self-Correction}.

The resulting loss objective for \modelname~is therefore expressed as below:
\begin{align}
\mathcal{L} = \mathcal{L}_{\mathrm{p}} + \lambda\mathcal{L}_{\mathrm{r}} \label{eq:total}
\end{align}
where $\lambda$ controls the strength of the rollout-based off-path self-correction.

\subsubsection{Model Architecture Details.}
We parameterize $v_\theta$ with an $E(3)$-equivariant graph neural network (EGNN)~\citep{satorras2021n} operating on a radius graph rebuilt from the current geometry $\acute{x}_t$ at each step. Each node carries its atom-type one-hot and the time $t$, together with the invariant distances $\|\acute{x}_t-R\|$ and $\|\acute{x}_t-P\|$ that condition the field on the endpoints. The network outputs an equivariant per-atom velocity, to which we add a learned anchor term $a\,(R-\acute{x}_t)+b\,(P-\acute{x}_t)$ with per-node invariant gains $(a,b)$, the output is projected to be mean-free ($\Pi_0$), granting $v_\theta$ with translational and rotational equivariance.

\subsubsection{Bidirectionally Ensembled Learning.}
Since the reactant and product state play symmetric roles as starting points, we randomly swap $(R,P)$ and reverse the reference path with probability $p_{\mathrm{swap}}$ during training, so that $v_\theta$ learns to reach the TS from both directions. At inference we integrate both the forward (RS$\rightarrow$TS) and the swapped (PS$\rightarrow$TS) trajectories and average the two midpoint geometries, which reduces variance and improves accuracy.

\subsubsection{Training \& Inference.}
Algorithm~\ref{alg:train} and~\ref{alg:sample} describe the mini-batchwise training step and inference step for \modelname~respectively. We optimize Eq.~\eqref{eq:total} with Adam, a linear warmup followed by cosine decay, and gradient clipping, and keep an EMA of the weights for evaluation. At inference, we generate the TS geometry and full reaction trajectory by a fixed-step Euler integration of $v_\theta$ from $R$ (Algorithm~\ref{alg:sample}), treating the midpoint as the predicted TS geometry, combined with the bidirectional average above. Details related to hyperparameters are available in the Appendix.

\section{Experiments}

\begin{table*}[t]
\centering
\scriptsize
\resizebox{\textwidth}{!}{
\begin{tabular}{llcccccc}
\toprule
\textbf{Split} & \textbf{Model} &
\shortstack{\textbf{RMSD}$\downarrow$ \textbf{(\AA)}} &
\shortstack{\textbf{RMSD$\circ$}$\downarrow$ \textbf{(\AA)}} &
\shortstack{\textbf{D-MAE}$\downarrow$ \textbf{(\AA)}} &
\shortstack{\textbf{Angle MAE}$\downarrow$ \textbf{($^\circ$)}} &
\shortstack{\textbf{Dihedral MAE}$\downarrow$ \textbf{($^\circ$)}} &
\textbf{Steric Clashes}$\downarrow$ \\
\midrule

\multirow{8}{*}{\textbf{Native}}
& \modelname~(\textbf{ours})
& \textbf{0.1423{\tiny$\pm$0.0029}}
& \textbf{0.1430{\tiny$\pm$0.0025}}
& \textbf{0.0953{\tiny$\pm$0.0023}}
& \underline{3.6726{\tiny$\pm$0.1665}}
& \textbf{16.3456{\tiny$\pm$0.3813}}
& \underline{0.0515{\tiny$\pm$0.0218}} \\

& \textbf{FragmentFlow}
& 0.2947{\tiny$\pm$0.0039}
& 0.2952{\tiny$\pm$0.0037}
& 0.1250{\tiny$\pm$0.0014}
& 5.6060{\tiny$\pm$0.0817}
& \underline{19.1161{\tiny$\pm$0.4772}}
& 0.1052{\tiny$\pm$0.0029} \\

& \textbf{React-OT}
& 0.3181{\tiny$\pm$0.0208}
& 0.3195{\tiny$\pm$0.0197}
& 0.1347{\tiny$\pm$0.0090}
& 6.0807{\tiny$\pm$0.3202}
& 20.6216{\tiny$\pm$1.4188}
& 0.1261{\tiny$\pm$0.0114} \\

& \textbf{GoFlow}
& 0.3320{\tiny$\pm$0.0175}
& 0.6111{\tiny$\pm$0.0138}
& 0.1108{\tiny$\pm$0.0072}
& 3.7314{\tiny$\pm$0.3844}
& 48.2212{\tiny$\pm$1.4943}
& 0.0892{\tiny$\pm$0.0031} \\

& \textbf{MolGen}
& \underline{0.2748{\tiny$\pm$0.0298}}
& 0.6240{\tiny$\pm$0.0746}
& \underline{0.0975{\tiny$\pm$0.0113}}
& \textbf{3.4853{\tiny$\pm$0.4585}}
& 47.4763{\tiny$\pm$2.0883}
& 0.0920{\tiny$\pm$0.0161} \\

& \textbf{MEPIN}
& 0.3788{\tiny$\pm$0.0032}
& 0.3944{\tiny$\pm$0.0215}
& 0.1591{\tiny$\pm$0.0024}
& 6.4861{\tiny$\pm$0.1579}
& 23.6957{\tiny$\pm$0.1985}
& \textbf{0.0439{\tiny$\pm$0.0115}} \\

& \textbf{OAReactDiff}
& 0.2790{\tiny$\pm$0.0316}
& \underline{0.2795{\tiny$\pm$0.0316}}
& 2.3899{\tiny$\pm$3.7201}
& 9.4076{\tiny$\pm$1.2400}
& 31.3326{\tiny$\pm$2.7781}
& 0.5568{\tiny$\pm$0.0785} \\

& \textbf{TSDiff}
& 0.4110{\tiny$\pm$0.0144}
& 0.5563{\tiny$\pm$0.0116}
& 0.1879{\tiny$\pm$0.0091}
& 4.1107{\tiny$\pm$0.3031}
& 67.2785{\tiny$\pm$1.7700}
& 0.0900{\tiny$\pm$0.0089} \\

\midrule

\multirow{8}{*}{\textbf{Reaction-Core}}
& \modelname~(\textbf{ours})
& \textbf{0.1463{\tiny$\pm$0.0017}}
& \textbf{0.1466{\tiny$\pm$0.0017}}
& \textbf{0.1070{\tiny$\pm$0.0017}}
& 3.5880{\tiny$\pm$0.0967}
& \textbf{13.5312{\tiny$\pm$0.1744}}
& \underline{0.0703{\tiny$\pm$0.0031}} \\

& \textbf{FragmentFlow}
& \underline{0.2951{\tiny$\pm$0.0034}}
& \underline{0.2962{\tiny$\pm$0.0030}}
& 0.1302{\tiny$\pm$0.0017}
& 5.4019{\tiny$\pm$0.0947}
& \underline{16.3244{\tiny$\pm$0.2028}}
& 0.1301{\tiny$\pm$0.0065} \\

& \textbf{React-OT}
& 0.3249{\tiny$\pm$0.0040}
& 0.3257{\tiny$\pm$0.0040}
& 0.1461{\tiny$\pm$0.0021}
& 6.0694{\tiny$\pm$0.1630}
& 17.7312{\tiny$\pm$0.2971}
& 0.1624{\tiny$\pm$0.0185} \\

& \textbf{GoFlow}
& 0.3667{\tiny$\pm$0.0110}
& 0.5972{\tiny$\pm$0.0082}
& 0.1245{\tiny$\pm$0.0040}
& \underline{3.4696{\tiny$\pm$0.2301}}
& 43.0869{\tiny$\pm$0.7722}
& 0.0909{\tiny$\pm$0.0057} \\

& \textbf{MolGen}
& 0.2954{\tiny$\pm$0.0272}
& 0.6157{\tiny$\pm$0.0182}
& \underline{0.1095{\tiny$\pm$0.0095}}
& \textbf{3.2704{\tiny$\pm$0.2972}}
& 47.2822{\tiny$\pm$1.5692}
& 0.0886{\tiny$\pm$0.0095} \\

& \textbf{MEPIN}
& 0.3689{\tiny$\pm$0.0045}
& 0.3750{\tiny$\pm$0.0051}
& 0.1595{\tiny$\pm$0.0018}
& 5.7755{\tiny$\pm$0.1048}
& 18.8219{\tiny$\pm$0.1820}
& \textbf{0.0550{\tiny$\pm$0.0040}} \\

& \textbf{OAReactDiff}
& 0.3520{\tiny$\pm$0.0400}
& 0.3521{\tiny$\pm$0.0400}
& 6.4421{\tiny$\pm$11.9996}
& 11.9197{\tiny$\pm$2.1303}
& 35.8212{\tiny$\pm$4.5869}
& 1.0148{\tiny$\pm$0.1896} \\

& \textbf{TSDiff}
& 0.4882{\tiny$\pm$0.0103}
& 0.6231{\tiny$\pm$0.0047}
& 0.2374{\tiny$\pm$0.0085}
& 4.0037{\tiny$\pm$0.2477}
& 67.0407{\tiny$\pm$0.9258}
& 0.0823{\tiny$\pm$0.0041} \\

\midrule

\multirow{8}{*}{\textbf{Barrier}}
& \modelname~(\textbf{ours})
& \textbf{0.1747{\tiny$\pm$0.0009}}
& \textbf{0.1755{\tiny$\pm$0.0011}}
& \underline{0.1245{\tiny$\pm$0.0010}}
& \underline{4.2977{\tiny$\pm$0.0523}}
& \textbf{16.1259{\tiny$\pm$0.1606}}
& \underline{0.0707{\tiny$\pm$0.0052}} \\

& \textbf{FragmentFlow}
& 0.3328{\tiny$\pm$0.0027}
& \underline{0.3354{\tiny$\pm$0.0032}}
& 0.1442{\tiny$\pm$0.0017}
& 5.8395{\tiny$\pm$0.1459}
& \underline{17.9202{\tiny$\pm$0.1100}}
& 0.1791{\tiny$\pm$0.0063} \\

& \textbf{React-OT}
& 0.3472{\tiny$\pm$0.0089}
& 0.3479{\tiny$\pm$0.0087}
& 0.1508{\tiny$\pm$0.0036}
& 6.0959{\tiny$\pm$0.0659}
& 18.5204{\tiny$\pm$0.3830}
& 0.1762{\tiny$\pm$0.0058} \\

& \textbf{GoFlow}
& 0.9996{\tiny$\pm$0.0002}
& 0.5977{\tiny$\pm$0.0122}
& 1.3980{\tiny$\pm$0.3833}
& 56.7464{\tiny$\pm$4.3317}
& 88.4164{\tiny$\pm$1.0570}
& 19.2725{\tiny$\pm$15.4847} \\

& \textbf{MolGen}
& \underline{0.3309{\tiny$\pm$0.0574}}
& 0.6274{\tiny$\pm$0.0328}
& \textbf{0.1232{\tiny$\pm$0.0240}}
& \textbf{3.7634{\tiny$\pm$0.7697}}
& 47.0931{\tiny$\pm$2.3636}
& 0.1418{\tiny$\pm$0.0123} \\

& \textbf{MEPIN}
& 0.4033{\tiny$\pm$0.0239}
& 0.4126{\tiny$\pm$0.0084}
& 0.1731{\tiny$\pm$0.0087}
& 6.1290{\tiny$\pm$0.0557}
& 20.2533{\tiny$\pm$0.9519}
& \textbf{0.0497{\tiny$\pm$0.0588}} \\

& \textbf{OAReactDiff}
& 0.3579{\tiny$\pm$0.0125}
& 0.3580{\tiny$\pm$0.0125}
& 14.5856{\tiny$\pm$19.9701}
& 12.3039{\tiny$\pm$0.7928}
& 35.3215{\tiny$\pm$1.3716}
& 1.1102{\tiny$\pm$0.0580} \\

& \textbf{TSDiff}
& 0.4989{\tiny$\pm$0.0063}
& 0.6211{\tiny$\pm$0.0044}
& 0.2504{\tiny$\pm$0.0070}
& 4.6180{\tiny$\pm$0.2802}
& 67.1732{\tiny$\pm$0.8333}
& 0.1505{\tiny$\pm$0.0065} \\

\bottomrule
\end{tabular}
}
\caption{Transition state prediction results on different data splits. Lower is better for all metrics. The best and second-best results are highlighted in bold and underlined, respectively.}
\label{tab:main_results}
\end{table*}

\subsection{Experimental Settings}

As mentioned in the previous section, we use the Transition1x dataset~\citep{schreiner2022transition1x} in our experiments on TS geometry prediction. We trained and evaluated \modelname~and its baseline models on three types of data splits which are \textbf{Native}, \textbf{Reaction-Core} and \textbf{Barrier}. The original \textbf{Native} split provided by the dataset partitions reactions by molecular formula. \textbf{Reaction-Core} partitions reactions by atom-mapped reaction cores, measuring generalization to unseen bond rearrangements, while \textbf{Barrier} holds out reactions from the tails of the activation-energy distribution, evaluating extrapolation to unseen energy regimes. 

We evaluate the generated transition states using metrics for structural accuracy and chemical validity. Following prior work~\citep{kim2023tsdiff,duan2023oareactdiff,duan2025reactot}, our primary evaluation metric is the root-mean-square deviation (\textbf{RMSD}) between generated and reference TS geometries. We report the permutation-, rigid-alignment-, and reflection-invariant \textbf{RMSD} computed by the \texttt{pymatgen} molecule matcher~\citep{ong2013pymatgen}, which treats a structure and its mirror image as equivalent.

We additionally report its chirality-preserving variant \textbf{RMSD}$\circ$, which disallows reflection during structural alignment, following the convention applied to the original baseline implementations. To further assess local structural consistency, we additionally report the mean absolute error of the pairwise interatomic distance matrix (\textbf{D-MAE}). Finally, we evaluate chemical validity using the bond-angle and dihedral-angle MAE (\textbf{Angle MAE}, \textbf{Dihedral MAE}), as well as the number of \textbf{Steric Clashes}, defined as non-bonded atom pairs separated by less than $1.2$~\AA. We repeated all experiments with five random seeds (0--4) and report the mean and standard deviation across seeds.

\subsection{Results on Transition State Prediction}

Table~\ref{tab:main_results} summarizes the quantitative results on the three evaluation splits. Across the 18 split-metric combinations, \modelname~overall ranked first on 11 and second on 6. Notably, \modelname~outperformed its baselines with minimal discrepancy between \textbf{RMSD} and its chirality-preserving variant, whereas \textbf{MolGEN}~\citep{zhou2025molgen}, \textbf{GoFlow}~\citep{galustian2025goflow}, and \textbf{TSDiff}~\citep{kim2023tsdiff} showed gaps exceeding $0.1\,\AA$. Also, \modelname~achieved the best \textbf{Dihedral MAE} across all data splits, highlighting its ability to effectively utilize its learned 3D-equivariant features of geometry states in producing vector fields during model inference.

According to the results, \modelname~did not rank first on \textbf{Angle MAE} or \textbf{Steric Clashes}. On \textbf{Angle MAE}, \textbf{MolGEN} ranked first across all three splits. We attribute this to its edge featurization, which combines radial and angular bases and therefore encodes three-body geometry directly, whereas our EGNN backbone conditions only on interatomic distances. On \textbf{Steric Clashes}, \textbf{MEPIN} ranked first across all splits. We attribute this to its energy-based training objective where it optimizes a MaxFlux path functional while our supervision is purely geometric as $v_\theta$ is regressed onto NEB-derived reference velocities without any explicit energy-related components in our flow matching framework.

\subsection{Ablation Study on Off-Path Correction}

\begin{wraptable}{r}{0.52\textwidth}
    \centering
    \vspace{-1.0em}
    \small
    \setlength{\tabcolsep}{3.0pt}
    \renewcommand{\arraystretch}{1.10}

    \resizebox{\linewidth}{!}{
    \begin{tabular}{lccccc}
    \toprule
    \textbf{Model} &
    \textbf{RMSD} &
    \textbf{D-MAE} &
    \textbf{Angle MAE} &
    \textbf{Dihedral MAE} &
    \textbf{Steric} \\
    \midrule

    \modelname
    & \shortstack{\textbf{0.1423}\\{\scriptsize(0.0029)}}
    & \shortstack{\textbf{0.0953}\\{\scriptsize(0.0023)}}
    & \shortstack{\textbf{3.6726}\\{\scriptsize(0.1665)}}
    & \shortstack{\textbf{16.3456}\\{\scriptsize(0.3813)}}
    & \shortstack{\textbf{0.0515}\\{\scriptsize(0.0218)}} \\

    \textbf{w/o POPC}
    & \shortstack{0.1510\\{\scriptsize(0.0051)}}
    & \shortstack{0.0999\\{\scriptsize(0.0047)}}
    & \shortstack{3.8442\\{\scriptsize(0.1939)}}
    & \shortstack{17.2167\\{\scriptsize(0.7750)}}
    & \shortstack{0.0861\\{\scriptsize(0.0075)}} \\

    \textbf{w/o ROPC}
    & \shortstack{0.1538\\{\scriptsize(0.0023)}}
    & \shortstack{0.1071\\{\scriptsize(0.0021)}}
    & \shortstack{4.3021\\{\scriptsize(0.0955)}}
    & \shortstack{17.6225\\{\scriptsize(0.2269)}}
    & \shortstack{0.0924\\{\scriptsize(0.0117)}} \\

    \textbf{w/o both}
    & \shortstack{0.1680\\{\scriptsize(0.0030)}}
    & \shortstack{0.1261\\{\scriptsize(0.0035)}}
    & \shortstack{5.4284\\{\scriptsize(0.1811)}}
    & \shortstack{19.1725\\{\scriptsize(0.5580)}}
    & \shortstack{0.1042\\{\scriptsize(0.0168)}} \\

    \bottomrule
    \end{tabular}
    }

    \caption{Ablation study on Off-Path Self-Correction.}
    \label{tab:ablation}
    \vspace{-1.0em}
\end{wraptable}

We conducted an ablation study on dual off-path correction scheme, removing either or both components from the training objective of \modelname. \modelname~(\textbf{w/o POPC}) denotes the model trained without the perturbation-based off-path correction term (i.e., without $D_{\mathrm{p}}$ in Eq.~\eqref{eq:opc1}); \modelname~(\textbf{w/o ROPC}) denotes the model trained without performing the rollout stage (i.e. omitting (B) within the minibatch loop in Alg.~\ref{alg:train}); and \modelname~(\textbf{w/o both}) denotes the model trained with neither, reducing to the base objective in Eq.~\eqref{eq:main}.

As shown in Table~\ref{tab:ablation}, supervising \modelname~on the reference velocity fields alone significantly exhibits worst results all evaluation metrics, demonstrating that both components are vital for mitigating exposure bias (\textbf{w/o both}). Notably, skipping the rollout step (\textbf{w/o ROPC}) degrades performance more than removing the correction term from $\mathcal{L}_{\mathrm p}$ (\textbf{w/o POPC}). This suggests that learning to pull back from rolled-out, off-path trajectories matters more than correcting perturbed states sampled around the reference path.

\subsection{Visualization of Energy Profiles}

\begin{wrapfigure}{r}{0.43\textwidth}
    \centering
    \vspace{-1.0em}
    \includegraphics[width=0.41\textwidth]{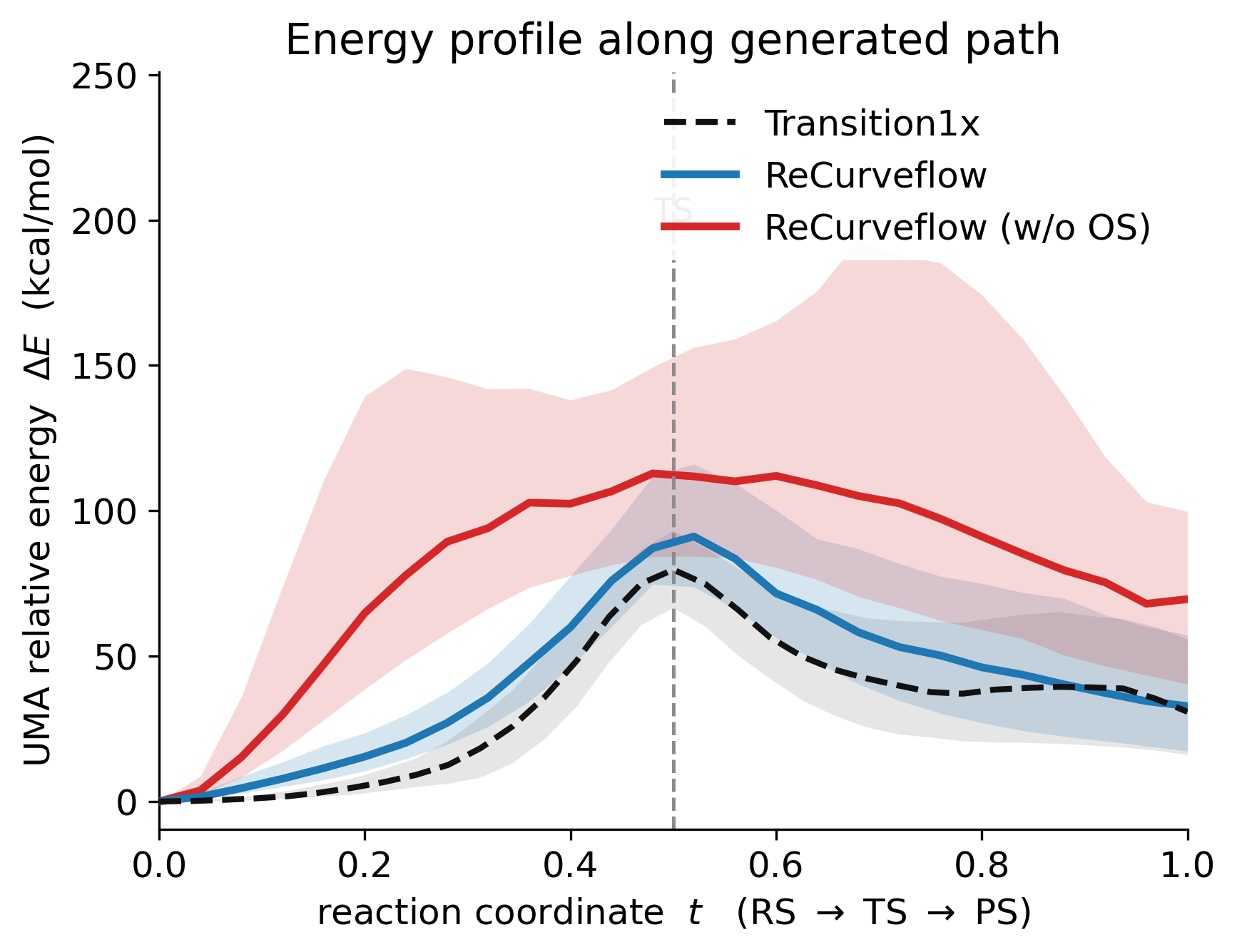}
    \caption{Comparison between \modelname~and \modelname~w/o OS based on energy profiles.}
    \label{fig:energy_profile}
    \vspace{-1.0em}
\end{wrapfigure}

As mentioned above, our reformulated flow matching framework enables \modelname~to generate reaction trajectories of atomic configurations spanning from RS to PS. For all test reactions of the \textbf{Native} split, we computed UMA energy profiles along the model-generated reaction trajectories and compared them against profiles computed the same way from ten geometry images provided by the Transition1x dataset. 

Fig.~\ref{fig:energy_profile} shows the resulting energy profiles, where the solid line and shaded region denote the mean and standard deviation across test reactions at each point along the trajectory. Here, the energy profiles derived from model-generated trajectories (\modelname) and from the reference NEB images (Transition1x) exhibit closely matching shapes, with both curves peaking near the transition state and following comparable energetic trends across reaction progress.

To further examine the benefits of using all images in constructing curved reference paths via natural cubic spline interpolation, we implemented an ablated variant by using only the PS, TS and RS images. We denote this ablation as \modelname~(\textbf{w/o OS}). Quantitative results on TS geometry prediction are available in the Appendix. As shown in Fig.~\ref{fig:energy_profile}, reaction trajectories generated by \modelname~(\textbf{w/o OS}) exhibit substantially higher variability across test reactions, with less consistent energy peaks than the other two variants. The local curvature in both the ascending (RS$\to$TS) and descending (TS$\to$PS) branches is also markedly weaker, indicating that it underfits the sharpness of the energy barrier. These results indicate that supervising the reference path on the full NEB image band helps \modelname~generate more physically plausible reaction trajectories.

\subsection{Reaction Path Initialization for NEB}

\label{sec:neb_initialization}

\begin{figure}[t]
    \centering
    \includegraphics[width=\textwidth]{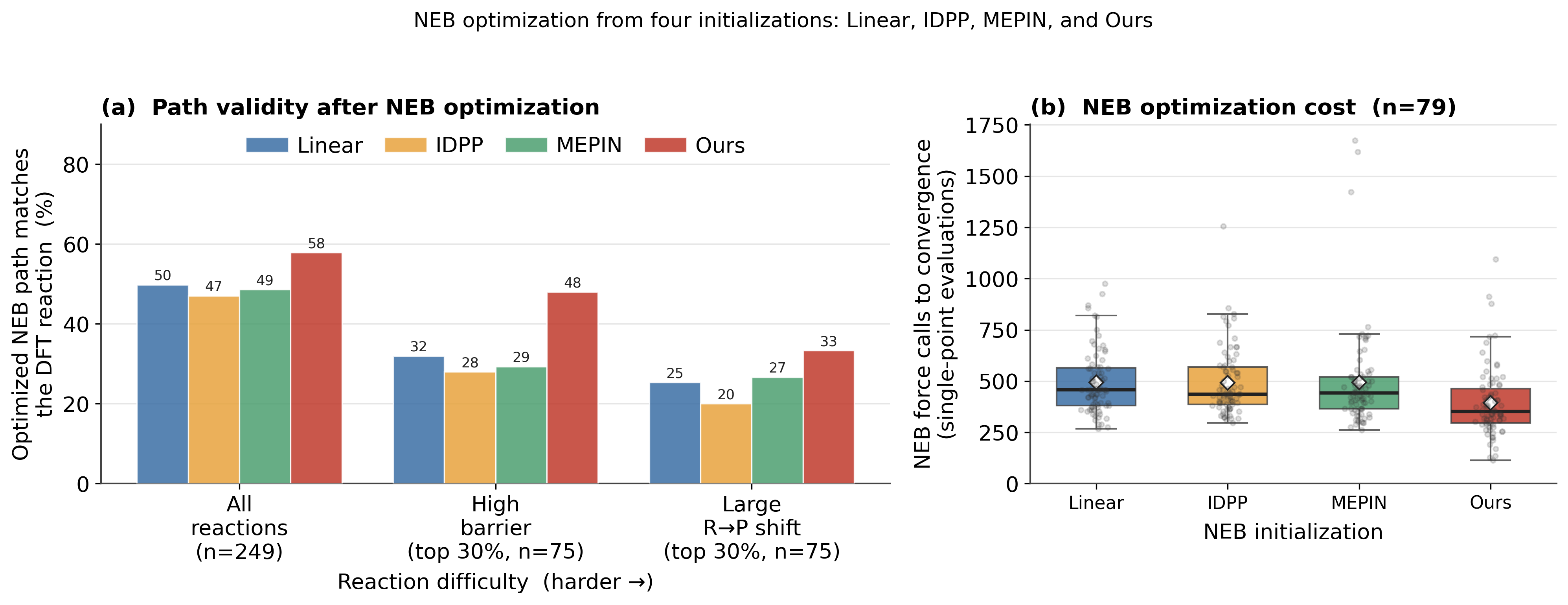}
    \caption{Comparison of NEB optimization from different initializations.}
    \label{fig:neb}

\end{figure}

The main goal of learning-based reaction path prediction methods, notably MEPIN~\citep{zhang2025mepin} is to provide feasible initial guesses of geometry images that can substantially accelerate the convergence of nudged elastic band (NEB) optimization, a well-known cost bottleneck in computational chemistry workflows~\citep{zhao2025harnessing}. We therefore evaluated \modelname~in this practical aspect, measuring how its generated reaction paths act as good initializations for quantum-chemical NEB optimization.

We compared four initialization strategies, Cartesian \textbf{Linear} Interpolation, \textbf{IDPP} interpolation~\citep{smidstrup2014improved} and using the reaction trajectories of geometry states generated by MEPIN~\citep{zhang2025mepin} and \modelname. All paths were resampled by arc length to contain the same number of images and are optimized using identical NEB settings. We first calculated the proportion of generated reaction paths with validity based on NEB optimization results (i.e convergence). As shown in Fig.~\ref{fig:neb}(a), \modelname~achieved the highest figures in all three reaction groups (All Reactions, High Barrier, Large Geometry Shift from Reactant to Product State). The results are further emphasized in challenging reaction subsets where other methods showed less than 40\% validity rate. Details of the analysis procedure are available in the Appendix.

Furthermore, we evaluated NEB optimization efficiency, which is calculated based on number of NEB force calls, on the reactions of which all four methods converged to the correct DFT saddle point. As shown in Fig.~\ref{fig:neb}(b), \modelname~delivered an approximately $1.3\times$ improvement in optimization efficiency over \textbf{Linear} and \textbf{IDPP}, requiring significantly fewer single-point evaluations to reach convergence (paired Wilcoxon signed-rank test, $p<0.001$). These results demonstrate our flow matching framework learns physically grounded configurational changes from continuous curved reference reaction paths and consequently provides reliable, initial geometry images that require smaller corrections during NEB optimization.

\subsection{Analysis on Off-Path Self-Correction Dynamics}

\begin{figure}[t]
    \includegraphics[width=\textwidth]{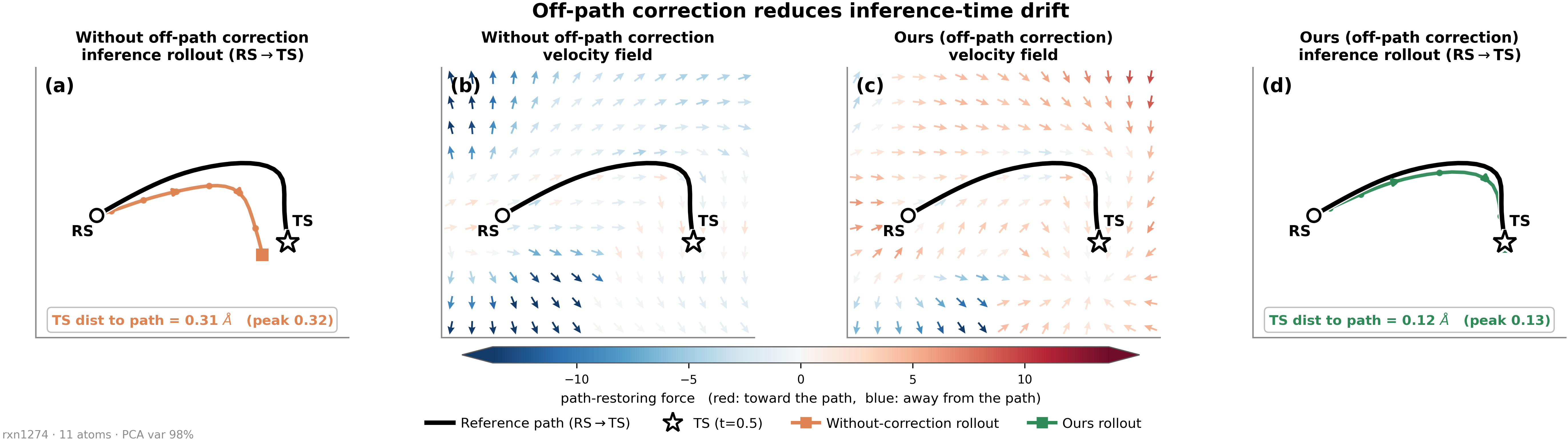}
    \caption{Analysis on Off-Path Correction Dynamics
    }
    \label{fig:exposure_bias}
\end{figure}

To verify that the proposed Supervised Correction method behaves as intended, we examined \modelname~deals with off-path geometry encounters during inference rollouts. Given the generated reaction trajectory of molecular geometries from RS to TS, we fit a two-dimensional PCA basis so that the projected components constitute the curved reference path (Figure~\ref{fig:exposure_bias}). We then overlaid a uniform square grid over this plane, covering them both on and far from the projected path, and evaluated the learned velocity at each grid point to display the local flow structure surrounding the path. Each point was reconverted to a molecular geometry state via PCA-inverse transform, and assigned the timepoint of its nearest neighbor on the reference path. Each of these projections represent a directional vector ($\langle v,(X^\star-x)/\|X^\star-x\|\rangle$), where red and blue ones indicate motion toward and away from the reference path, respectively.

Through this analysis method, we compared the velocity fields and inferred reaction trajectories produced by \modelname~with \modelname~(\textbf{w/o both}), an ablated variant supervised without dual off-path correction. According to Fig.~\ref{fig:exposure_bias}(a), the trajectory (RS$\rightarrow$TS) inferred from \modelname~trained without supervised correction sways away from the actual reference path by a distance margin of 0.31\AA{}. Its corresponding velocity field in Fig.~\ref{fig:exposure_bias}(b) shows \modelname~(\textbf{w/o both})'s velocity fields emitting weak or even negative path correction forces. In contrast, \modelname~produces consistently positive path correction forces in the same region (Fig.~\ref{fig:exposure_bias}(c)), demonstrating the effectiveness of our proposed method. Consequently, the generated trajectory in Fig.~\ref{fig:exposure_bias}(d) resembles the reference path, reducing the distance between the predicted TS and the reference path from $0.31$\,\AA{} to $0.12$\,\AA{}. Although Fig.~\ref{fig:exposure_bias} presents a single reaction (rxn1274) case, the same behavior was consistently observed across other test reactions, of which their visualizations are available in the Appendix.

\section{Conclusion}

We introduce \modelname, a flow matching framework for transition-state geometry and reaction-path prediction that replaces linear interpolants with curved reference paths derived from NEB minimum-energy paths. One limitation is data-related where existing reaction-path resources cover mostly small organic molecules over a narrow range of atomic species, which bounds how far the model can be expected to generalize to larger and more realistic chemical systems. As future work, we plan to leverage fast and accurate machine-learned energy models to perform large-scale NEB calculations on more diverse reactions, enabling larger reaction-path datasets and further improving the scalability and practical applicability of our framework.

\bibliographystyle{unsrtnat}
\bibliography{references}


\clearpage
\appendix
\renewcommand{\thesection}{\Alph{section}.\arabic{section}}
\setcounter{section}{0}

\begin{appendices}

\section{Analysis of Exposure Bias under Curved Reference Path Supervision}
\label{appendix:path_analysis}

Recent studies have shown that exposure bias in flow matching originates from the mismatch between the state distribution observed during training and that encountered during inference~\citep{qin2026soar}. During training, the velocity field is supervised only on reference states sampled from the prescribed training trajectory by minimizing

\begin{equation}
\mathcal{L}_{\mathrm{FM}}
=
\mathbb{E}_{x\sim p^{\mathrm{train}}}
\left[
\|v_\theta(x,t)-u(x,t)\|^2
\right],
\end{equation}

where $p^{\mathrm{train}}$ denotes the training-state distribution, $u(x,t)$ is the ground-truth velocity field, and $v_\theta(x,t)$ is the learned velocity field.

During inference, however, the model recursively generates its own trajectory by numerically integrating the learned velocity field,

\begin{equation}
\hat{x}_{n+1}
=
\hat{x}_n
+
h\,v_\theta(\hat{x}_n,t_n),
\end{equation}

where $\hat{x}_n$ denotes the rollout state at integration step $n$ and $h$ is the integration step size. Since each prediction depends on previous predictions, even a small velocity prediction error perturbs subsequent rollout states. Consequently, inference is performed on a model-induced state distribution $p^\theta$ rather than the training distribution,

\[
p^\theta
\neq
p^{\mathrm{train}},
\]

which constitutes the exposure bias analyzed in prior work~\citep{qin2026soar}.

The above analysis explains why rollout states gradually deviate from the reference trajectory. However, it does not characterize how such rollout deviations affect the supervision target itself when the reference trajectory is \textit{curved} instead of \textit{straight}. Since \textsc{ReCurveFlow} is supervised on continuously \textbf{curved reference paths}, obtained by fitting a natural cubic spline to the ordered sequence of geometry images for each reaction, rather than affine linear interpolants, we analyze this additional geometric effect below.

\subsection{Curvature-dependent Sensitivity induced by Reference Path Time-Shift}

Let

\[
X^\star:[0,1]\rightarrow\mathbb{R}^{N\times3}
\]

denote the curved reference path, parameterized by the flow time $t$, and let

\[
v^\star(t)
=
X^{\star\prime}(t)
\]

be its tangent velocity field.

To isolate the effect of rollout deviation, we consider the idealized case where the generated geometry remains exactly on the reference path but is shifted by a small time offset $\delta$,

\[
x
=
X^\star(t+\delta).
\]

Here, $\delta$ measures the discrepancy between the rollout state's actual progress along the reference path and its assigned flow time. Although the generated geometry still lies on the reference path, the tangent direction that locally follows the reference path is now

\[
v^\star(t+\delta),
\]

whereas the supervision target associated with the current flow time remains

\[
v^\star(t).
\]

Therefore, the supervision mismatch induced solely by the time offset is

\[
v^\star(t+\delta)-v^\star(t).
\]

\begin{prop}[Curvature-dependent Tangent Mismatch]

Assume that the curved reference path

\[
X^\star:[0,1]\rightarrow\mathbb{R}^{N\times3}
\]

is twice continuously differentiable.

Then the supervision mismatch induced by a time offset satisfies

\[
v^\star(t+\delta)-v^\star(t)
=
X^{\star\prime\prime}(t)\delta
+
o(|\delta|).
\]

Consequently,

\[
\lim_{\delta\rightarrow0}
\frac{
\|v^\star(t+\delta)-v^\star(t)\|
}
{|\delta|}
=
\|X^{\star\prime\prime}(t)\|.
\]

\end{prop}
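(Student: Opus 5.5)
The plan is a first-order Taylor argument applied to the tangent field $v^\star = X^{\star\prime}$. Since $X^\star$ is assumed $C^2$, the map $v^\star$ is $C^1$ on $[0,1]$, with derivative $X^{\star\prime\prime}$. The ambient space $\mathbb{R}^{N\times3}$ is finite-dimensional, so all norms are equivalent. I will therefore work with the Frobenius norm used elsewhere in the paper and argue coordinatewise wherever that is convenient. Throughout, I fix $t$ and restrict to $\delta$ with $t+\delta\in[0,1]$, using one-sided limits when $t$ is an endpoint.

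\textbf{Step 1 (expansion).} I write the mismatch via the fundamental theorem of calculus:
\[
v^\star(t+\delta)-v^\star(t)=\int_0^1 X^{\star\prime\prime}(t+s\delta)\,\delta\,ds .
\]
This equals $X^{\star\prime\prime}(t)\delta + R(\delta)$, where
\[
R(\delta)=\delta\int_0^1\bigl(X^{\star\prime\prime}(t+s\delta)-X^{\star\prime\prime}(t)\bigr)\,ds .
\]

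\textbf{Step 2 (remainder).} The second derivative $X^{\star\prime\prime}$ is continuous on the compact interval $[0,1]$, hence uniformly continuous. This gives
\[
\|R(\delta)\|\le|\delta|\sup_{|u|\le|\delta|}\|X^{\star\prime\prime}(t+u)-X^{\star\prime\prime}(t)\|=o(|\delta|),
\]
which establishes the first claim. Alternatively, the first claim is simply the definition of differentiability of $v^\star$ at $t$, and Step 2 only makes it explicit. The natural cubic spline used in the paper is genuinely $C^2$, so the hypothesis is met there; on each knot interval it is in fact a polynomial.

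\textbf{Step 3 (limit).} I divide by $|\delta|$ and apply the reverse triangle inequality:
\[
\Bigl|\tfrac{\|v^\star(t+\delta)-v^\star(t)\|}{|\delta|}-\|X^{\star\prime\prime}(t)\tfrac{\delta}{|\delta|}\|\Bigr|\le \tfrac{\|R(\delta)\|}{|\delta|}\to0 .
\]
Since $\delta/|\delta|=\pm1$, the norm $\|X^{\star\prime\prime}(t)\,\delta/|\delta|\|$ equals $\|X^{\star\prime\prime}(t)\|$ for either sign. The stated limit follows.

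I expect no real obstacle. The only care point is the sign of $\delta$ in Step 3, which the absolute value handles, together with the boundary behavior at $t\in\{0,1\}$, which the one-sided limits handle.
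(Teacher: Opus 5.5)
Your proof is correct and follows the same route as the paper's: a first-order Taylor expansion of the tangent field $v^\star = X^{\star\prime}$ with an $o(|\delta|)$ remainder, followed by dividing by $|\delta|$ and passing to the limit. Your integral form of the remainder, the reverse triangle inequality, the explicit handling of $\delta/|\delta|=\pm1$ (which the paper leaves implicit in its $o(|\delta|)$ notation), and the one-sided limits at the endpoints only make the paper's steps more rigorous.
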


\begin{proof}

By definition, the tangent velocity field is

\[
v^\star(t)
=
X^{\star\prime}(t).
\]

Hence,

\[
v^\star(t+\delta)-v^\star(t)
=
X^{\star\prime}(t+\delta)
-
X^{\star\prime}(t).
\]

Since $X^\star$ is assumed to be twice continuously differentiable, its tangent field varies smoothly with respect to the path parameter. Therefore, for a sufficiently small time offset $\delta$, the tangent at $t+\delta$ can be approximated by the first-order Taylor expansion around $t$,

\[
X^{\star\prime}(t+\delta)
=
X^{\star\prime}(t)
+
X^{\star\prime\prime}(t)\delta
+
o(|\delta|).
\]

Substituting this expansion into the previous equation gives

\[
\begin{aligned}
v^\star(t+\delta)-v^\star(t)
&=
X^{\star\prime}(t+\delta)
-
X^{\star\prime}(t)
\\
&=
X^{\star\prime\prime}(t)\delta
+
o(|\delta|),
\end{aligned}
\]

which proves the first result.

Finally,

\[
\frac{
\|v^\star(t+\delta)-v^\star(t)\|
}
{|\delta|}
=
\left\|
X^{\star\prime\prime}(t)
+
\frac{o(|\delta|)}{|\delta|}
\right\|,
\]

and since

\[
\frac{o(|\delta|)}{|\delta|}
\rightarrow
0
\qquad
(\delta\rightarrow0),
\]

we obtain

\[
\lim_{\delta\rightarrow0}
\frac{
\|v^\star(t+\delta)-v^\star(t)\|
}
{|\delta|}
=
\|X^{\star\prime\prime}(t)\|.
\]

\end{proof}

The proposition shows that under curved reference path supervision, even when the rollout state remains on that path, a small time shift of it changes the locally appropriate supervision target in proportion to the local curvature of the path. By contrast, an affine linear reference path satisfies

\[
X^{\star\prime\prime}(t)=0,
\]

which implies

\[
v^\star(t+\delta)=v^\star(t)
\]

for every sufficiently small $\delta$. Therefore, while the origin of exposure bias remains the train-inference state-distribution mismatch identified in prior flow-matching literature, curved reference-path supervision introduces an additional curvature-dependent sensitivity once rollout states become misaligned with their assigned flow times. This observation provides theoretical motivation for explicitly correcting rollout states back toward the reference path during training.

\section{\textsc{Recurveflow}}
\subsection{Details on \textsc{ReCurveflow}'s Hyperparameters}

\begin{table}[H]
\centering
\caption{Hyperparameters used for training ReCurveflow.}
\label{tab:hyperparameters}
\resizebox{\columnwidth}{!}{
\begin{tabular}{lll}
\toprule
Group & Hyperparameter & Value \\
\midrule
Architecture & layers / hidden dim / cutoff & 6 / 512 / 6\,\AA \\
             & max neighbors & 32 \\
Gaussian tube & $\sigma$ / $\sigma_{\mathrm{fl}}$ & 0.05 / 0.05 \\
Off-Path Correction & $k$ / $\lambda$ & 10 / 0.5 \\
Rollout & $K$ / $\beta$ / $\sigma_{\mathrm a}$ & 25 / 0.5 / 0.1 \\
Bidirectional ensemble & $p_{\mathrm{swap}}$ & 0.5 \\
Inference & Euler steps $N$ & 50 \\
Optimization & optimizer / lr / warmup & Adam / $5\times10^{-4}$ / 500 \\
             & batch / epochs / grad clip & 128 / 1500 / 1.0 \\
             & EMA decay & 0.999 \\
\bottomrule
\end{tabular}
}
\end{table}

\begin{itemize}\itemsep2pt
\item \textbf{layers / hidden dim} (6 / 512) --- depth and width of the EGNN velocity field.
\item \textbf{cutoff} (6\,\AA) --- radius for building the edges of the interaction graph.
\item \textbf{max neighbors} (32) --- per-atom cap on that graph, bounding memory for the largest systems.
\item $\boldsymbol{\sigma}$ (0.05) --- width of the Gaussian tube at its widest point, $\gamma(t)=\sigma\sqrt{t(1-t)}+\sigma_{\mathrm{fl}}$.
\item $\boldsymbol{\sigma_{\mathrm{fl}}}$ (0.05) --- constant floor on $\gamma(t)$; keeps the tube open at the endpoints, where the bridge profile vanishes and the sampler starts.
\item $\boldsymbol{k}$ (10) --- gain of the corrective term $k\,(X^\star(t)-x_t)$ added to the target velocity; $1/k=0.1$ is the flow-time constant of the pull back onto the path.
\item $\boldsymbol{\lambda}$ (0.5) --- weight of the rollout term relative to the on-path term in the total loss.
\item $\boldsymbol{K}$ (25) --- Euler steps of the stop-gradient rollout that generates off-path training states.
\item $\boldsymbol{\beta}$ (0.5) --- flow-time span covered by that rollout, so its step size is $\beta/K=0.02$.
\item $\boldsymbol{\sigma_{\mathrm a}}$ (0.1) --- amplitude of the extra perturbation on the rollout state, spreading supervision over a range of deviation magnitudes.
\item $\boldsymbol{p_{\mathrm{swap}}}$ (0.5) --- probability of exchanging the reactant and product roles during training, making one field valid in both directions.
\item \textbf{Euler steps} $\boldsymbol{N}$ (50) --- inference discretization; even, so that the transition state at $t=0.5$ falls on a step boundary.
\item \textbf{optimizer / lr / warmup} (Adam / $5\times10^{-4}$ / 500) --- linear warmup over 500 steps, then cosine decay to $1\%$ of the peak.
\item \textbf{batch / epochs / grad clip} (128 / 1500 / 1.0) --- 128 reactions per device over 4 GPUs (effective 512); gradients clipped to global norm 1.0.
\item \textbf{EMA decay} (0.999) --- exponential moving average of the weights, used for all reported evaluations.
\end{itemize}

\subsection{Details on \textsc{ReCurveflow}'s Computational Specs}
All experiments were conducted on NVIDIA RTX 3090 (24\,GB) GPUs. Our model contains $11.6$M parameters. Training requires approximately $8$--$10$ hours on $4$ GPUs, while inference takes $15.7$\,ms per reaction on a single GPU with a peak memory usage of $0.23$\,GB (batch size $128$).

\section{Visualization of Energy Profiles}

\subsection{Ablation Results related to Inclusion of OS Images in Curved Reference Path Construction}

\begin{table}[H]
\centering

\small
\renewcommand{\arraystretch}{1.1}
\setlength{\tabcolsep}{6pt}

\begin{tabular}{lcc}
\toprule
\textbf{Metric} & \textbf{Ours} & \textbf{w/o OS} \\
\midrule
Path RMSD ($\downarrow$)        & \textbf{0.101} & 0.129 \\
R/P RMSD ($\downarrow$)         & \textbf{0.013} & 0.058 \\
Barrier Overshoot ($\downarrow$)& \textbf{2.38}  & 9.26 \\
R/P Energy Error ($\downarrow$) & \textbf{0.055} & 1.90 \\
\bottomrule
\end{tabular}
\caption{Effect of On-Trajectory State (OS) images on generated reaction paths. Lower is better for all metrics.}
\label{tab:ablation_ws}
\end{table}

To quantitatively evaluate the effect of including On-Trajectory States (OS) in constructing the curved reference path, we compare the generated reaction trajectories using four path-level metrics. \textbf{Path RMSD} measures the average geometric discrepancy between the generated trajectory and the reference reaction path after arc-length alignment. \textbf{Endpoint RMSD} evaluates whether the generated trajectory terminates at the correct product geometry. \textbf{Barrier Overshoot} measures the excess of the maximum predicted energy above the ground-truth barrier height, reflecting how accurately the energy barrier is reproduced. Finally, \textbf{Endpoint Energy Error} measures the absolute energy difference between the generated final state and the reference product, indicating whether the generated trajectory converges to the correct energetic minimum.

As shown in Table~\ref{tab:ablation_ws}, incorporating OS consistently improves all four metrics. In particular, the full model achieves approximately $1.3\times$ lower Path RMSD ($0.101$ vs.\ $0.129\,\mathrm{\AA}$), $4.5\times$ lower Endpoint RMSD ($0.013$ vs.\ $0.058\,\mathrm{\AA}$), $3.9\times$ lower Barrier Overshoot ($2.38$ vs.\ $9.26$ kcal/mol), and over $30\times$ lower Endpoint Energy Error ($0.055$ vs.\ $1.90$ kcal/mol). These improvements indicate that supervising the model with dense intermediate states allows it to better capture both the geometric evolution and the underlying energy landscape of the reaction pathway, rather than merely producing an accurate transition-state geometry.

\section{Reaction Path Initialization for NEB}
\label{app:neb}

We provide implementation details of the NEB initialization study reported in the main text. All experiments are performed on the Transition1x~\cite{schreiner2022transition1x} test split (287 reactions), using the DFT ($\omega$B97X/6-31G(d)) reactant, transition state (TS), and product geometries as reference.

\subsection{Experimental Setup}
\label{app:neb-procedure}

\paragraph{Initializations.}
We compare four initialization strategies: (i) \emph{Linear}, Cartesian interpolation between the reactant and product; (ii) \emph{IDPP}~\cite{smidstrup2014improved}; (iii) \emph{MEPIN}; and (iv) \emph{Ours}. Since MEPIN and Ours produce different numbers of frames, their trajectories are resampled to the target image count using arc-length interpolation. MEPIN uses a single sample (seed 0) for parity with the other methods.

\paragraph{Common NEB settings.}
All methods are optimized under identical settings with $9$ images (two fixed endpoints and seven interior images). NEB optimization is performed using the UMA interatomic potential (\texttt{uma-s-1p2}, \texttt{omol})~\cite{wood2026family} through ASE, together with the FIRE optimizer ($k=0.1~\mathrm{eV/\AA^2}$, $f_{\max}=0.1~\mathrm{eV/\AA}$, maximum $500$ optimization steps). DFT geometries are used only for evaluation.

\paragraph{Optimization cost.}
Optimization cost is measured by the number of single-point energy/force evaluations during NEB optimization, providing a hardware-independent measure of computational cost.

\subsection{Evaluation Protocol}
\label{app:validity}

\paragraph{Clean set.}
Since UMA is used during optimization, reactions whose UMA saddle does not coincide with the DFT TS cannot recover the DFT pathway under any initialization. We therefore relax each DFT TS on the UMA surface using Sella~\cite{hermes2019accelerated} and retain reactions whose optimized structure remains within $0.1~\mathrm{\AA}$ of the DFT TS, resulting in a clean set of $249/287$ reactions ($87\%$). All subsequent analyses are performed on this clean set.

\paragraph{Path validity.}
An optimized band is considered to recover the DFT reaction if (i) NEB converges, (ii) the energy profile forms a physical single barrier, and (iii) the highest-energy image matches the DFT TS with permutation-aware RMSD below $0.2~\mathrm{\AA}$ using the \texttt{pymatgen} molecule matcher. We additionally require a barrier height above $0.05~\mathrm{eV}$.

\paragraph{Efficiency comparison.}
Optimization cost is compared only on reactions for which all methods recover the DFT reaction (the same-destination intersection). We report median force evaluations and optimizer steps, using paired Wilcoxon signed-rank tests for optimization cost and McNemar tests for paired success rates.

\paragraph{Reaction difficulty.}
We additionally analyze the hardest $30\%$ of reactions according to either the DFT barrier height or the reactant--product Kabsch RMSD. Heavy-atom count is not used because larger molecules often produce artificially lower RMSD.

\paragraph{Convergence versus validity.}
Raw NEB convergence rates are similar across methods. However, convergence alone does not guarantee recovery of the correct reaction channel. We therefore report path validity rather than convergence rate as the primary evaluation criterion.

\section{Analysis on Off-Path Self-Correction Dynamics}

\subsection{Case Studies of Off-Path Self-Correction Dynamics}

To better understand how the proposed off-path correction affects inference, we visualize representative reaction trajectories together with the learned velocity fields projected onto the first two principal components. In each case, panel (a) shows the rollout of the baseline model without off-path correction, panel (b) its corresponding velocity field, panel (c) the velocity field learned with our correction mechanism, and panel (d) the rollout of our full model. Red arrows indicate velocity components that restore states toward the reference path, whereas blue arrows indicate components that push states away from it.

\begin{figure*}[t]
    \centering
    \includegraphics[width=\textwidth]{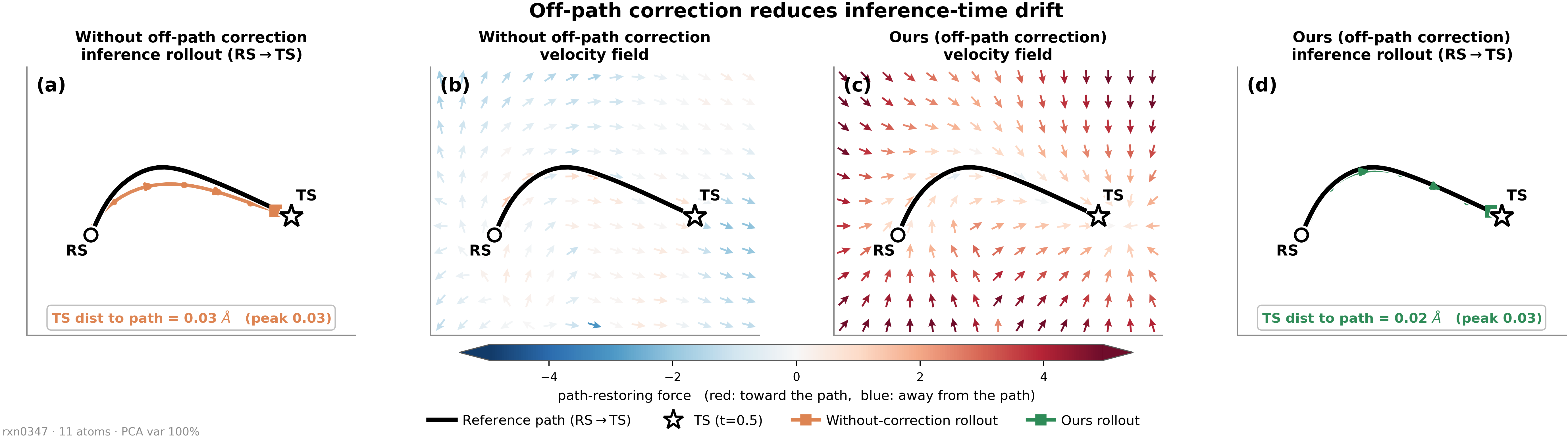}
    \caption{Case study on reaction rxn0347.}
    \label{fig:exposure_bias_0347}
\end{figure*}

\paragraph{Reaction rxn0347.}
Both models predict the TS accurately. However, our rollout follows the reference path more closely throughout integration, resulting in a smaller path deviation despite similar TS accuracy.

\begin{figure*}[t]
    \centering
    \includegraphics[width=\textwidth]{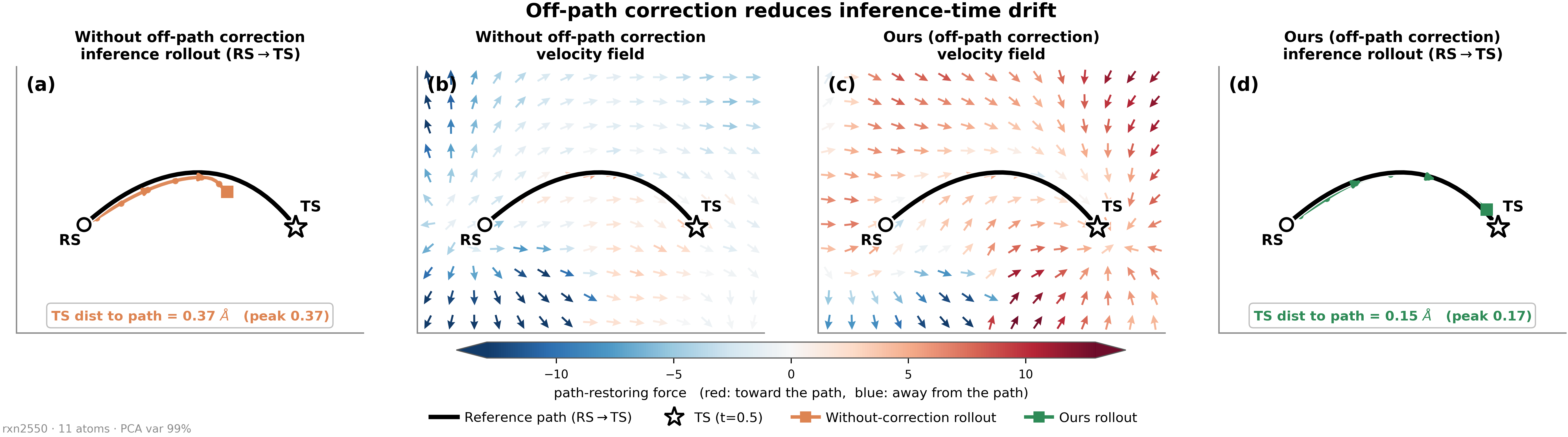}
    \caption{Case study on reaction rxn2550.}
    \label{fig:exposure_bias_2550}
\end{figure*}

\paragraph{Reaction rxn2550.}
The baseline gradually loses guidance toward the TS and deviates from the reference path. Our corrected velocity field provides a stronger restoring force, allowing the rollout to remain closer to the path and reach the TS more accurately.

\begin{figure*}[t]
    \centering
    \includegraphics[width=\textwidth]{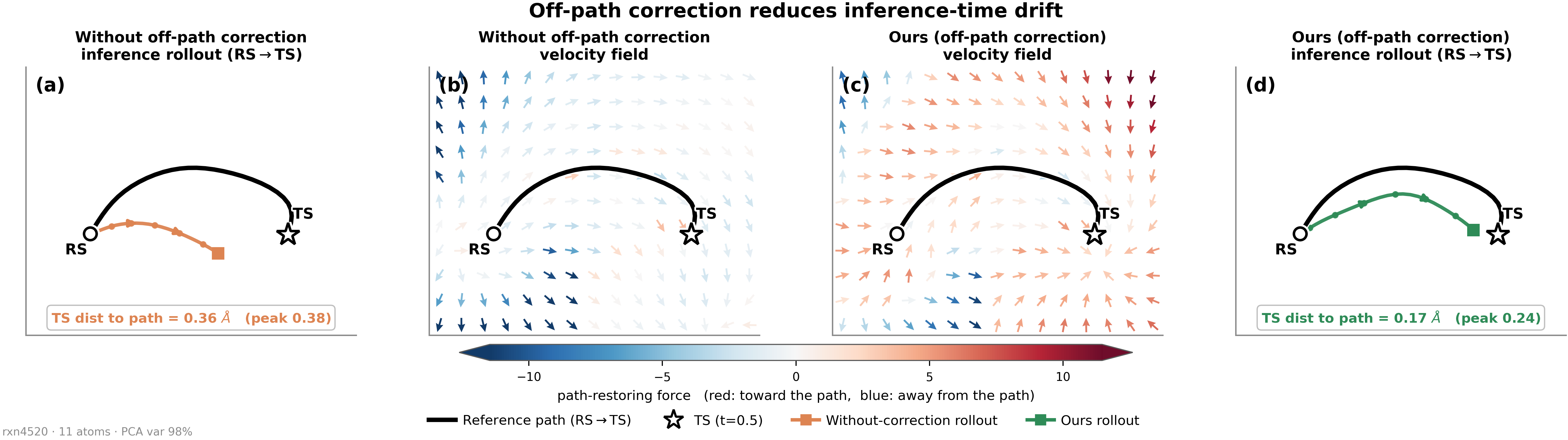}
    \caption{Case study on reaction rxn4520.}
    \label{fig:exposure_bias_4520}
\end{figure*}

\paragraph{Reaction rxn4520.}
This reaction contains a sharp turn immediately before the TS. Although our method reduces the deviation, it still struggles to follow this highly curved segment, indicating that paths with large local curvature remain challenging.

\begin{figure*}[t]
    \centering
    \includegraphics[width=\textwidth]{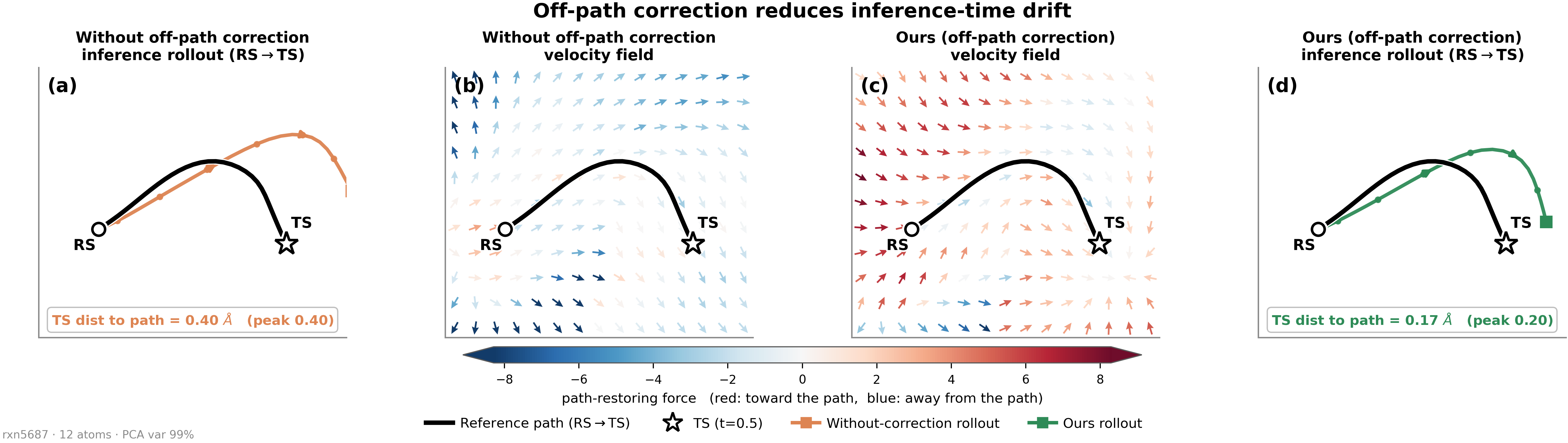}
    \caption{Case study on reaction rxn5687.}
    \label{fig:exposure_bias_5687}
\end{figure*}

\paragraph{Reaction rxn5687.}
The baseline diverges completely from the reference path, whereas our model successfully recovers the trajectory toward the TS. However, a small path-repelling region remains in the learned velocity field (panel (c)), leading to a residual deviation before reaching the TS.

\section{Kinetics-guided Reaction Design}

\begin{figure*}[t]
    \centering
    \begin{minipage}[t]{0.36\textwidth}
        \centering
        \includegraphics[width=\linewidth]{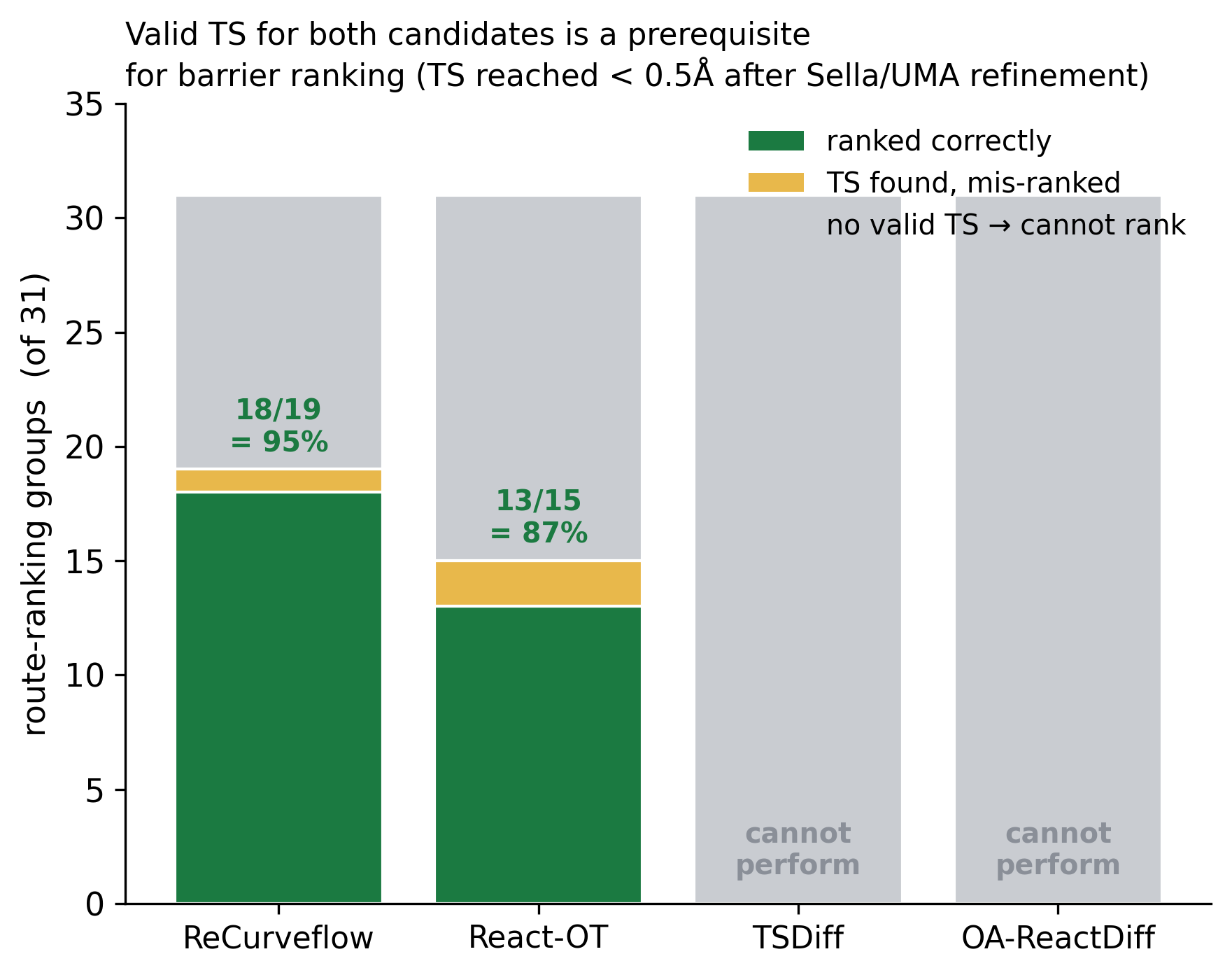}
        \subcaption{Capability for kinetics-guided reaction design.}
        \label{fig:capability}
    \end{minipage}
    \hfill
    \begin{minipage}[t]{0.60\textwidth}
        \centering
        \includegraphics[width=\linewidth]{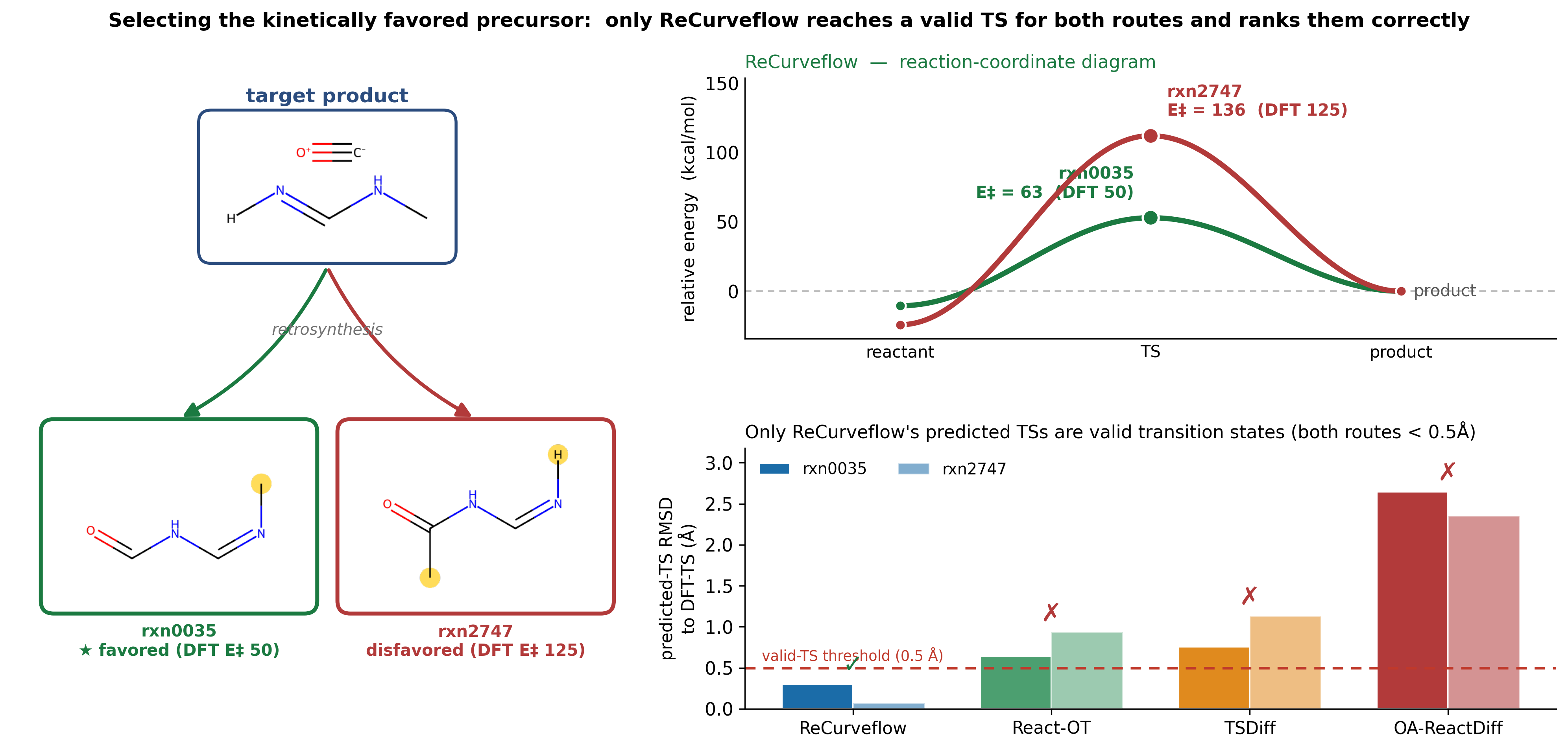}
        \subcaption{Representative reaction-group example.}
        \label{fig:casestudy}
    \end{minipage}

    \caption{
    Kinetics-guided reaction design.
    (a) Activation-barrier ranking is feasible only when valid transition-state geometries are obtained for both candidate reactions.
    ReCurveflow generates valid transition states substantially more often than previous methods, enabling reliable pathway ranking.
    (b) Representative reaction group producing the same product.
    ReCurveflow accurately predicts both transition-state geometries and correctly identifies the lower-barrier pathway, whereas the baselines fail to obtain a valid transition state for at least one reaction.
    }
    \label{fig:kinetics}
\end{figure*}

When multiple elementary reactions produce the same product, the kinetically preferred pathway is determined by the activation barrier.
A practical transition-state prediction model should therefore not only generate accurate transition-state geometries but also enable reliable activation-barrier ranking.
To evaluate this capability, we constructed 31 reaction groups from the unseen formula split of Transition1x, where each group contains reactions sharing the same product but different reactants.
The activation barrier was computed as
\[
E^\ddagger = E_{\mathrm{UMA}}(\mathrm{TS}) - E_{\mathrm{UMA}}(\mathrm{R}).
\]

Before evaluating predicted transition-state geometries, we first verified UMA as a surrogate evaluator by rescoring the reference DFT transition states, obtaining 94\% agreement with the DFT barrier ranking.
Since geometrically inaccurate transition states may occasionally produce plausible energies, we further optimized every predicted transition-state geometry using the same Sella/UMA saddle optimization procedure and regarded a prediction as valid only if the optimized structure converged to the reference DFT transition state.
Under this criterion, ReCurveflow reached the reference transition state in 78\% of reactions within 0.5\,\AA\ and in 100\% within 1.0\,\AA, compared with 68\% for React-OT, 10\% for TSDiff, and 0\% for OA-ReactDiff.

Activation-barrier ranking is possible only when valid transition-state geometries are obtained for both candidate reactions, making transition-state validity a prerequisite for practical pathway selection (Fig.~\ref{fig:capability}).
ReCurveflow successfully ranked 19 of the 31 reaction groups and correctly identified the lower-barrier pathway in 18 of them (95\%).
React-OT achieved 13 correct rankings among 15 feasible reaction groups (87\%), whereas TSDiff and OA-ReactDiff failed to produce valid transition-state geometries for any reaction group and therefore could not perform the task.
As shown in Fig.~\ref{fig:casestudy}, ReCurveflow accurately predicted the transition-state geometries of both \texttt{rxn0035} and \texttt{rxn2747}, correctly identifying the lower-barrier pathway, while all baselines failed to obtain a valid transition state for at least one reaction.
These results demonstrate that ReCurveflow is capable of supporting reliable \emph{kinetics-guided reaction design} through accurate transition-state prediction and activation-barrier ranking.


\end{appendices}
\newpage

\end{document}